\documentclass[letterpaper]{article} 
\usepackage{aaai25}  
\usepackage{times}  
\usepackage{helvet}  
\usepackage{courier}  
\usepackage[hyphens]{url}  
\usepackage{graphicx} 
\urlstyle{rm} 
\usepackage{natbib}  
\usepackage{caption} 
\frenchspacing  
\setlength{\pdfpagewidth}{8.5in}  
\setlength{\pdfpageheight}{11in}  
%
\usepackage{algorithm}

\newif\ifarxiv
\arxivtrue

%
\usepackage{newfloat}
\usepackage{listings}
\DeclareCaptionStyle{ruled}{labelfont=normalfont,labelsep=colon,strut=off} 
\lstset{%
	basicstyle={\footnotesize\ttfamily},
	numbers=left,numberstyle=\footnotesize,xleftmargin=2em,
	aboveskip=0pt,belowskip=0pt,%
	showstringspaces=false,tabsize=2,breaklines=true}
\floatstyle{ruled}
\newfloat{listing}{tb}{lst}{}
\floatname{listing}{Listing}
%
\pdfinfo{
/TemplateVersion (2025.1)
}

\setcounter{secnumdepth}{2} 

%


\title{Exploiting Symmetries in MUS Computation}
\author{
    Ignace Bleukx\textsuperscript{\rm1},
    Hélène Verhaeghe\textsuperscript{\rm1,2},
    Bart Bogaerts\textsuperscript{\rm1,3},
    Tias Guns\textsuperscript{\rm1}
}

\affiliations{
    \textsuperscript{\rm1}KU Leuven, Dept.\ of Computer Science; Leuven.AI, Celestijnenlaan 200A, 3000 Leuven, Belgium \\
    \textsuperscript{\rm2}UCLouvain, ICTEAM, Place Sainte Barbe 2, 1348 Louvain-la-Neuve, Belgium \\
    \textsuperscript{\rm3}Vrije Universiteit Brussel, Dept.\ of Computer Science, Pleinlaan 9, 1050 Brussels, Belgium \\
    ignace.bleukx@kuleuven.be, helene.verhaeghe@uclouvain.be, bart.bogaerts@kuleuven.be, tias.guns@kuleuven.be
}

\usepackage{amsmath}
\usepackage{mathtools}
\usepackage{booktabs}
\usepackage{multirow}
\usepackage{algorithm}
\usepackage[noend]{algorithmic}
\usepackage[dvipsnames]{xcolor}
\usepackage{xspace}
\usepackage{subcaption}

\newcommand\commentmargin[1]{{\small #1}}
\usepackage{marginnote}

 \renewcommand\commentmargin[1]{{\marginpar{#1}}}
  \renewcommand\commentmargin[1]{{{#1}}} 
 \renewcommand\commentmargin[1]{}


\newcommand{\bart}[1]{{\commentmargin{\color{OliveGreen}Bart: #1}}}
\newcommand{\ignace}[1]{{\commentmargin{\color{purple}Ignace: #1}}}
\newcommand{\hve}[1]{{\commentmargin{\color{orange}Hélène: #1}}}

\newcommand{\pageplan}[1]{{\small{\textit{\color{red!50!black} PP: #1}}}}
\renewcommand{\pageplan}[1]{{\marginpar{\small{\textit{\color{red!50!black} PP: #1}}}}}
\renewcommand{\pageplan}[1]{}

\newcommand{\ignore}[1]{}

\usepackage{amsthm}
\ifdefined\beginexample\else
\newtheorem{example}{Example}
\fi

\ifdefined\begindefinition\else
\newtheorem{definition}{Definition}
\fi

\ifdefined\beginproposition\else
\newtheorem{proposition}{Proposition}
\fi

\newtheorem{sidenote}{Side note}

\usepackage{tcolorbox}


\newcommand{\assignment}{\mm{\alpha}}

\newcommand{\cpmpy}{CPMpy\xspace}


\renewcommand{\implies}{\rightarrow}

\usepackage{ amssymb }

\newcommand\formula{\mm{\phi}}
\newcommand\formulahalf{\mm{\indicators \implies \formula}}

\newcommand\partialassignment{\mm{\mu}}

\usepackage{amssymb}
\usepackage{graphicx}

\newcommand{\approxmodels}{\mathrel{\scalebox{1}[1.5]{\mm{\shortmid}}\mkern-3.1mu\raisebox{0.1ex}{$\approx$}}}

\newcommand\mm[1]{\ensuremath{#1}\xspace}
\newcommand\call[1]{\mm{\textsc{#1}}}

\usepackage{wasysym}

\newcommand{\generators}{\mathcal{G}}
\newcommand{\breakers}{\mathcal{B}}
\newcommand{\getbreakers}{\call{GetBreakingConstraints}(\generators)}

\newcommand{\core}{\mm{U}}
\newcommand{\corr}{\mm{C}}

\newcommand{\checkthis}{\mm{c}}
\newcommand{\mus}{\mm{\mathit{MUS}}}
\newcommand{\mcs}{\mm{\mathit{MCS}}}
\newcommand{\issat}{\texttt{is\_sat}\xspace}

\newcommand{\indicators}{\mathcal{A}}

\newcommand{\new}[1]{{\color{teal}#1}} 

\newcommand{\musses}{\mm{\mathit{MUSes}}}
\newcommand{\mcses}{\mm{\mathit{MCSes}}}

\newcommand{\map}{\mm{M}}

\newcommand{\breakid}{\mm{\textsc{BreakID}}}

\newcommand{\quickxplain}{QuickXplain\xspace}

\newcommand{\alllits}{\mm{\mathcal{L}}} 

\newcommand{\php}[2]{\ensuremath{\mathit{php}(#1, #2)}}
\newcommand{\pigeoncons}[1]{\mm{P_{#1}}}
\newcommand{\holecons}[1]{\mm{H_{#1}}}

\newcommand{\constraintsymmetries}[1]{\mm{\call{ConstraintSymmetries}(#1)}}


\newcommand{\sotashrink}{Shrink}
\newcommand{\skipsym}{Symm-Shrink}
\newcommand{\recomp}{Symm-ShrinkR}

\newcommand{\sotaocus}{OCUS}
\newcommand{\lex}{Lex}
\newcommand{\enumsym}{Enum}

\newcommand\setstohit{\mm{H}}
\newcommand\cohs{\mm{\call{CondOptHittingSet}}}
\newcommand\sat{\texttt{{SAT}}}
\newcommand\corrsubsets{\mm{\call{CorrSubsets}}}
\newcommand\subsetT{\mm{S}}
\newcommand\symmocus{\mm{\call{Symm-OCUS}}}

\usepackage{bm}

\newcommand\lit{\mm{\ell}}
\newcommand\var{\mm{x}}


\usepackage{enumitem}
\newlist{RQ}{enumerate}{1}
\setlist[RQ]{label=\bfseries{}RQ\arabic*,leftmargin=30pt}

\newlist{EQ}{enumerate}{1}
\setlist[EQ]{label=\bfseries{}EQ\arabic*,leftmargin=30pt}

\newlist{WP}{enumerate}{1}
\setlist[WP]{label=\bfseries{}WP\arabic*.,leftmargin=30pt}

\usepackage{cleveref}

\begin{document}

\maketitle

\begin{abstract}
    In eXplainable Constraint Solving (XCS), it is common to extract a Minimal Unsatisfiable Subset (MUS) from a set of unsatisfiable constraints.
    This helps explain 
    to a user \emph{why} a constraint specification does not admit a solution.
    Finding MUSes can be computationally expensive for highly symmetric problems, as many combinations of constraints need to be considered.
    In the traditional context of solving satisfaction problems, symmetry has been well studied, and 
    effective ways to detect and exploit symmetries during the search exist.
    However, in the setting of finding MUSes of unsatisfiable constraint programs, symmetries are understudied.
    In this paper, we take inspiration from existing symmetry-handling techniques and adapt well-known MUS-computation methods to
    exploit symmetries in the specification, speeding-up overall computation time.
    Our results display a significant reduction of runtime for our adapted algorithms compared to the baseline 
    on symmetric problems. 
\end{abstract}

\bart{General warning: do not write things like  
	``In \citet{devriendt2012symmetry}'' (unless when talking about intestines of Jo Devriendt). Also do not write 
	``\citet{devriendt2012symmetry} proposes BLA'', but write ``\citet{devriendt2012symmetry} propose''. 
	The subject in these sentences is the set of authors; the ref in between brackets is only there to support the claim. I fixed this in several places. } 

\begin{section}{Introduction}
The field of eXplainable Constraint Solving (XCS) is a subfield of eXplainable AI (XAI) focused on explaining the solutions, or lack thereof, of constraint (optimization) problems.
Explaining \emph{why} a set of constraints does not admit a solution is often done through a \emph{Minimal Unsatisfiable Subset} (MUS), i.e., an irreducible subset of the constraints rendering the problem unsatisfiable.
This subset of constraints is then easier for a user to analyze than the full problem.
MUSes are also used for debugging constraint models \cite{leo2017debugging}, including minimization of faulty models when fuzz testing~\cite{paxian2023maxsat}, explaining \emph{why} an objective value is optimal \cite{bleukx2023simplifying} or to explain \emph{why} a fact follows from the constraints \cite{bogaerts2021framework}.

MUS computation techniques are well-studied and well-known in the XAI literature.
Frequently used algorithms can be classified into \emph{shrinking}\footnote{sometimes called \emph{destructive} or \emph{deletion-based}} methods \cite{marques2010minimal}; \emph{divide-and-conquer} algorithms such as QuickXplain \cite{junker2001quickxplain} and \emph{implicit-hitting-set based} methods \cite{ignatiev2015smallest}.

In some cases, it can be useful to not just compute a single MUS but to \emph{enumerate} a collection of MUSes or even all of them, as some MUSes may be easier understood by a user than others.
Many algorithms for computing a set of MUSes rely on the ``seed-and-shrink'' paradigm~\cite{bendik2020must, bendik2020replication}, where variants of the MARCO-algorithm are among the most popular techniques~\cite{liffiton2016fast}.
MUS-computation and enumeration techniques have been discussed extensively \cite{silva2020inconsistent,dev2021explanation}.

A prime concern for MUS computation and enumeration techniques is efficiency, especially for large problems, as checking if a valid assignment exists for a set of constraints can already be NP-hard \cite{DBLP:series/faia/336}. 
In practice, many techniques exist to speed up solving, e.g., by exploiting the problem structure. 
One of these, which has barely been studied in the context of MUS computation, is the exploitation of symmetries in the problem formulation.

Many real-world problems exhibit some kind of (variable and/or value) symmetry.
For example, packing items into equivalent trucks, assigning shifts to equivalent workers or scheduling tasks on equivalent machines.
Such symmetries can slow down combinatorial solvers, as they might have to consider all symmetric alternatives to an assignment.
Solvers can exploit symmetries to prune the search space and/or speed up the search \cite{gent2006symmetry,DBLP:series/faia/Sakallah21}.
Symmetry exploitation techniques are either \emph{static} or \emph{dynamic}.
Static techniques involve adding \emph{symmetry breaking constraints} to the specification before starting the solver.
Dynamic techniques exploit the symmetries during search, e.g., by automatically learning symmetric clauses~\cite{devriendt2012symmetry, chu2014lazy, mears2014lightweight,devriendt2017learning}, by modifying their branching behaviour~\cite{torsten2001symmetry, sabharwal2005symchaff} or by generating symmetry breaking constraints during the solvers' execution~\cite{metin2018cdclsym}.
Furthermore, several tools exist for automatically detecting symmetries in the constraint specifications for a variety of input formats~\cite{drtiwa11a,breakid2016,daimy22,ABR24SatsumaStructure-BasedSymmetryBreakingSAT}.

While symmetry-handling has extensively been studied for methods solving constraint satisfaction or optimization problems, they have barely been studied in the context of MUS computation or enumeration. 
``Classic'' symmetries are defined on assignments of variables, while MUS-computation algorithms reason over subsets of constraints. 
Still, symmetries over variables can also lead to symmetries over constraints and, hence, over MUSes.

In this paper, we build on this observation and investigate how to discover and exploit symmetries for MUS computation and enumeration. 
Our contributions are the following:
    (i) we formally define symmetries in MUS problems;
    (ii) we show how existing symmetry detection tools can be used to detect constraint symmetries by means of a reformulation with half-reified constraints
    \bart{weird terminology. I would go for ``blocking variables'' or ``assumption variables''. Also: this terminology is not used in the rest of the paper??? Later in the paper you DO use assumption variables, indicator variables, indicator constraints, half-reification... I would say: pick a single name and use it consistently} \hve{agreed, I would be in favor of assumption variables as it seems the most generic term}
    (iii) we show how to modify different types of MUS-computation algorithms to speed up the search for an MUS; and
   (iv) we evaluate the potential runtime improvements of symmetry breaking for MUS methods in an elaborate experimental evaluation.
\end{section}

\begin{section}{Background} \label{sec:background}
The methods presented in this paper are defined on constraints with Boolean variables only, but they can be generalized to richer constraint-solving paradigms such as SMT, MIP, or CP.
In this section, we recall several essential concepts and introduce the notation used throughout this paper.

A literal \lit is a Boolean variable $\var$ or its negation $\neg \var$.
We use \formula to represent a set of constraints over Boolean variables.
The set of literals in a constraint specification $\alllits(\formula)$ contains all variables occurring in \formula and their negation.
When \formula is clear from the context, we simply write \alllits.
An assignment \assignment maps Boolean variables in \formula to either true or false.
An assignment is \emph{total} if it assigns every variable in \formula, and \emph{partial} otherwise.
Assignments can be represented as a subset of literals, namely those assigned to true \cite{audemard2018glucose}. 
When a partial assignment \partialassignment can be \emph{expanded} to a total one while satisfying \formula, we write $\partialassignment \approxmodels \formula$.

Modern SAT-solvers allow the use of assumption variables \cite{nadel2012efficient, hickey2019speeding}.
In combination with implication constraints, they can be used to test whether a subset of constraints is satisfiable.
For each constraint $c$ in \formula, we introduce a Boolean indicator variable $a_c$ and use this variable to construct the half-reification of $c$: $a_c \implies c$ \cite{feydy2011half}.
We refer to the set of indicators as $\indicators$ and, abusing notation, we refer to the constraint specification \formula where all constraints are half-reified using variables $a_c$ as $\formulahalf$.
Now, any full assignment of $\indicators$ (which is also a partial assignment of $\formulahalf$), can be interpreted as a subset of constraints.
Namely, a constraint $c$ is part of the subset if its indicator variable $a_c$ is set to true.
So, when assuming a subset of literals $\indicators$ to be true, we can test whether a subset of constraints admits a valid assignment using a SAT-solver.
Moreover, if no such assignment exists, SAT-solvers can return a sufficient set of assumption variables that cause unsatisfiability.
Hence, we assume to have an oracle that takes as input a set of constraints $\formula$ and which returns a tuple $(\issat, \assignment, \core)$.
Here, \issat is a Boolean flag indicating whether $\formula$ is satisfiable.
When \issat is set to true, \assignment is the assignment found by the oracle, and $\core \subseteq \formula$ is a sufficient subset of constraints causing unsatisfiability otherwise.

Constraint specifications can exhibit \emph{symmetries}.
We distinguish two types of symmetries: syntactic and semantic.

\begin{definition}[Syntactic symmetry~\cite{breakid2016}] \label{def:syntactic}
Let $\pi$ be a permutation of all literals $\alllits$ in constraints \formula.
A \emph{syntactic symmetry} of \formula, is a permutation $\pi$ that commutes with negation (i.e., $\pi(\neg l) = \neg \pi(l)$), and that when applied to the literals in each of the constraints in \formula, maps \formula to itself.
\end{definition}

\begin{definition}[Semantic symmetry~\cite{breakid2016}]\label{def:semantic}
Let $\pi$ be a permutation of all literals $\alllits$ in constraints \formula.
A \emph{semantic symmetry} of \formula is a permutation $\pi$ that \emph{commutes with negation} and that \emph{preserves satisfaction} to $\phi$ (i.e., $\pi(\alpha)$ satisfies $\phi$ iff $\alpha$ satisfies $\phi$). 
\end{definition}

Any syntactic symmetry is also a semantic symmetry but not vice versa \cite{DBLP:series/faia/Sakallah21}.
In practice, most symmetry-detection tools only detect syntactic symmetries~\cite{breakid2016}, but all concepts in this paper are valid for semantic symmetries too, unless specified otherwise.
Therefore, we refer to ``symmetries'' in general in the remainder of this paper.
We write permutations using disjoint cycle notation.
E.g., $\pi = (abc)(de)$ denotes the permutation with: $\pi(a)=b, \pi(b)=c, \pi(c)=a, \pi(d)=e$ and $\pi(e)=d$

Some structured symmetry groups can be summarized as a row-interchangeable symmetry \cite{flener2002breaking,devriendt2014breakidglucose}.
\begin{definition}[Row-interchangeability]
A matrix $M=(x_{rc})$ of literals describes a row-interchangeability symmetry group if for each permutation  $\rho$ of its rows, $\pi^{M}_{\rho}: x_{rc} \mapsto x_{\rho(r)c}$ 
is a symmetry of \formula.
\end{definition}

The last required concept in this paper is that of Minimal Unsatisfiable Subsets (MUS) \cite{marques2010minimal}.
\begin{definition}[Minimal Unsatisfiable Subset]
A Minimal Unsatisfiable Subset (\mus) of a set of constraints $\phi$ is a set $U \subseteq \phi$ that is unsatisfiable and for which any proper subset $\core' \subsetneq \core$ is satisfiable.
\end{definition}

Informally, an MUS is a minimal set of constraints that renders the problem unsatisfiable.
Note that there may be several MUSes for a given unsatisfiable constraint problem.
\end{section}

\begin{section}{Symmetries for the MUS problem}
In this section, we define symmetries of the MUS problem.
Solving the MUS problem requires reasoning over subsets of constraints, whereas the traditional setting of solving a satisfaction problem requires reasoning over assignments.
Hence, symmetries for the MUS problem are symmetries of \emph{constraints}.

Note that any syntactic symmetry of variables in a set of constraints induces a symmetry of constraints \cite{cohen2006definitions}, as illustrated below.

\begin{example}[Pigeon hole problem]
\label{example:php}
Consider a pigeon-hole problem $\php{p}{h}$ with \mm{p} pigeons and \mm{h} holes:  
\begin{align}
    & \sum\nolimits_{j = 1}^{h} x_{ij} \geq 1 & \forall i \in \{1,\dots,p\} \tag{\pigeoncons{i}} \\
    & \sum\nolimits_{i = 1}^{p} x_{ij} \leq 1 & \forall j \in \{1,\dots,h\} \tag{\holecons{j}}
\end{align}

Where $x_{ij}$ are Boolean variables indicating whether pigeon \mm{i} is assigned to hole \mm{j}.
We refer to the constraint that ensures pigeon \mm{i} is in a hole as \pigeoncons{i} and the constraint ensuring at most one pigeon is in hole \mm{j} is refered to as \holecons{j}.

Considering $\php{4}{2}$, we identify four MUSes:
\begin{align*}
\{\pigeoncons{1}, \pigeoncons{2}, \pigeoncons{3}, \holecons{1}, \holecons{2}\}, 
\{\pigeoncons{1}, \pigeoncons{2}, \pigeoncons{4}, \holecons{1}, \holecons{2}\} \\
\{\pigeoncons{1}, \pigeoncons{3}, \pigeoncons{4}, \holecons{1}, \holecons{2}\}, 
\{\pigeoncons{2}, \pigeoncons{3}, \pigeoncons{4}, \holecons{1}, \holecons{2}\} 
\end{align*}
\end{example}
In the above specification, we notice several syntactic symmetries. E.g., $(x_{11}x_{21})(x_{12}x_{22})(x_{13}x_{23})(x_{14}x_{24})$ which in turn induces a symmetry of constraints: $(\pigeoncons{1}\pigeoncons{2})$. 


\subsection{Computing constraint symmetries} 
We propose to use existing symmetry-detection tools such as \breakid~\cite{breakid2016}, which traditionally detect \hve{syntactic? (to be precise, as above it is said that when speaking about symmetries we speak about both interchangeably?)} \ignace{Yes, but breakid does syntactic+, so a little bit more than purely syntactic as far as I understand} symmetries of variable assignments, and transform the input to these tools to detect symmetries in constraints.
In particular, we introduce indicator variables $a_c$ to construct $\formulahalf$ as defined in \Cref{sec:background}.

Before introducing symmetries that can be used when computing MUSes, we generalize the concept of symmetries of assignments to symmetries of partial assignments.


\begin{definition}[Partial symmetries]\label{def:partialsym}
Let $\subsetT \subseteq \alllits(\formula)$ be a subset of all literals $\alllits(\formula)$, closed under negation. 
A partial \subsetT-symmetry of a specification $\phi$ is a permutation of $\subsetT$ that \emph{commutes with negation} and that preserves satisfiability to $\phi$. 
That is, for any assignment $\mu$ of $\subsetT$, $\mu \approxmodels \formula$ iff $\pi(\mu) \approxmodels \formula$.
\end{definition}

Some tools allow us to search for partial symmetries explicitly, but they can be derived from ``classical symmetries'' as in Definitions~\ref{def:syntactic} and \ref{def:semantic}, as the following holds.

\begin{proposition}[Deriving partial symmetries]
Let $\subsetT$ be a subset of $\alllits(\formula)$ closed under negation and $\pi$ a symmetry of $\formula$.
If $\pi(\subsetT) = \subsetT$, then $\pi|_\subsetT$ is a partial \subsetT-symmetry of $\formula$.   
\end{proposition}
\bart{Proof is missing in appendix! Not needed in final version but we should put this in the Arxiv version that has all the appendices!}

We are now ready to fully define the concept of constraint symmetries as used in this paper.

\begin{definition}[Constraint symmetry]
	A permutation $\pi$ of the constraints in $\formula$ is called a \emph{constraint symmetry} for $\formula$ if for any subset $C \subseteq \formula$, $\pi(C)$ is satisfiable iff $C$ is satisfiable.
\end{definition}

The following proposition describes the exact relation between partial-symmetries and constraint symmetries.
\begin{proposition}
Let $\pi$ be a permutation of variables (that is a permutation of literals that does not cross polarity) $\indicators$ in $\indicators \implies \formula$, then $\pi$ directly maps to a permutation of constraints $\pi_\formula$.
Under this mapping, $\pi$ is an $\indicators$-symmetry iff $\pi_\formula$ is a constraint symmetry.
\end{proposition}

Intuitively, a subset of indicators ``enables'' a set of constraints and, when this initial set is (un)satisfiable, so is the set of constraints that is enabled by their symmetric image.


\bart{might also turn this into a small propsition: any permutation of assumptions $\pi$ directly maps to a permutation of constraints $\pi_C$. 
Under this mapping $\pi$ is an $\indicators$-symmetry iff $\pi_C$ is a constraint symmetry.} 
\bart{But technically speaking... there are $A$-symmetries that do not really map to constraint symmetries because the indicators migth be mapped to the negation of another indicators (so the precise correspondence is not super clear yet). In practice, this will nto happen since the assumption vars only appear in one polarity and we detect syntactic symms}
\bart{Above comment was here before submission already. But certainly: it is not \textbf{CLEAR} that this correspondence holds, so should be phrased more carefully!! } 
Hence, by computing partial symmetries of the specification $\formulahalf$, we can compute constraint symmetries using any existing tool supporting it.
%
We refer to the above method of finding constraint symmetries as $\constraintsymmetries{\formula}$.

Following \Cref{prop:symmus}, constraint symmetries map MUSes to MUSes and non-MUSes to non-MUSes.

\begin{proposition}\label{prop:symmus}
Given a constraint symmetry $\pi$ of $\formula$ and a subset $\core \subseteq \formula$.
Then, $\pi(\core)$ is an MUS of \formula if and only if $\core$ is an MUS of \formula.
\end{proposition}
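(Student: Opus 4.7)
The plan is to prove the biconditional by showing one direction explicitly and then appealing to symmetry of the argument (using that $\pi^{-1}$ is also a constraint symmetry). So first I would observe that if $\pi$ is a constraint symmetry, then so is $\pi^{-1}$: for any $C' \subseteq \formula$, setting $C = \pi^{-1}(C')$ in the defining equivalence gives that $C'$ is satisfiable iff $\pi^{-1}(C')$ is satisfiable. This lets us reduce to proving only the ``only if'' direction.

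Next I would take $\core$ to be an MUS of $\formula$ and verify the two MUS conditions for $\pi(\core)$. For unsatisfiability, apply the constraint-symmetry property directly to $C := \core$: since $\core$ is unsatisfiable, so is $\pi(\core)$. For minimality, let $S \subsetneq \pi(\core)$ be any proper subset. Because $\pi$ is a bijection on $\formula$, we have $\pi^{-1}(S) \subsetneq \core$, so by minimality of $\core$ the set $\pi^{-1}(S)$ is satisfiable. Applying the constraint-symmetry property once more (or equivalently the symmetry property of $\pi^{-1}$), $S = \pi(\pi^{-1}(S))$ is satisfiable. Thus $\pi(\core)$ is unsatisfiable with every proper subset satisfiable, i.e., an MUS.

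For the reverse direction, the argument is identical after swapping the roles of $\pi$ and $\pi^{-1}$: if $\pi(\core)$ is an MUS then by the same reasoning applied to the constraint symmetry $\pi^{-1}$, the set $\pi^{-1}(\pi(\core)) = \core$ is an MUS.

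The only mildly subtle point is the bijectivity step used in the minimality argument — one must make sure that taking preimages under $\pi$ maps strict subsets to strict subsets, which is immediate because $\pi$ is a permutation of $\formula$. The rest is just a direct unfolding of the definitions of constraint symmetry and MUS, so I do not anticipate a real obstacle beyond being careful that the constraint-symmetry property is invoked with the correct witness subset in each step.
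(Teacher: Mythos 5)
Your proof is correct and follows essentially the same route as the paper's: unsatisfiability of $\pi(\core)$ is immediate from the definition of constraint symmetry, and minimality is obtained by pulling a proper subset of $\pi(\core)$ back through $\pi^{-1}$ to a proper subset of $\core$. The only cosmetic differences are that you argue minimality directly (the paper phrases it as a contradiction) and that you are slightly more explicit about the reverse direction by noting that $\pi^{-1}$ is itself a constraint symmetry.
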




\end{section}

\begin{section}{Constraint symmetries in MUS algorithms}
Once detected, symmetries can be exploited to speed up the search for valid assignment(s) that satisfy $\phi$.
Static techniques include adding \emph{symmetry breaking constraints} to the constraints specification before solving.
Such constraints can speed-up the search by excluding a part of the search space.
A well-known symmetry breaking constraint is the Lex-Leader constraint, which excludes, given an order of the variables, any assignment~$\alpha$ if its symmetric counterpart $\pi(\alpha)$ is lexicographically below $\alpha$. 
Symmetry detection tools can often generate a set of breaking constraints along with the generators for each detected symmetry group.
Symmetry breaking constraints may break the symmetry completely or only partially \cite{mcdonald2002partialbreaking}.

Dynamic symmetry handling involves modifying the search algorithm of the solver itself to exploit the symmetries in the constraints.
For example, by avoiding symmetric branches in the search tree or by modifying propagation algorithms to exploit symmetries in the problem specifically.
While symmetry handling techniques have been extensively studied in the context of solving constraint programs, they do not directly apply to finding MUSes.

In particular, by adding a set of symmetry breaking constraints $\breakers$ to constraints \formula, any MUS of the ``broken specification'' will be a subset of $\formula \cup \breakers$.
This means that $\formula \cup \breakers$ can contain more MUSes than the original set of constraints, and new MUSes do not necessarily map to an original MUS.
\ignore{
E.g., when selecting a constraint forcing pigeons with lower indices to be in holes with lower indices ($\in \breakers$), selecting the constraint $\pigeoncons{2}$ as part of an MUS, implicitly selects $\pigeoncons{1}$ as pigeon 2 can only be in a hole if pigeon 1 has a hole assigned.
}

In this section, we generalize symmetry handling techniques to constraint symmetries for use in MUS-finding algorithms.
We explore both static symmetry breaking and techniques to exploit symmetries dynamically.

\subsection{Symmetric transition constraints}
\label{sec:dynamic-shrink}

One of the simplest classes of algorithms for finding an MUS are ``shrinking'' based methods \cite{marques2010minimal, wieringa2014incremental}.
These methods iteratively drop a constraint \checkthis from \formula, and call an oracle to check whether the remainder is (un)satisfiable.
If the remaining core is still UNSAT, the constraint can safely be dropped from the formula.
Otherwise, the constraint is required to ensure the unsatisfiability of the core and is marked as such (line~\ref{line:mark} in \Cref{algo:shrink}).
When a constraint is marked as required, it is called a \emph{transition constraint}~\cite{belov2011accelerating}.

\begin{definition}[Transition constraint]
Given an unsatisfiable set of constraints $\core$ containing constraint $c$.
If $\core \setminus \{c\}$ is satisfiable, $c$ is called a \emph{transition constraint}.   
\end{definition}

\citet{belov2012towards} propose an optimization to this simple approach called \emph{clause set refinement}.
This technique exploits the core found by the solver after it decides the input is unsatisfiable.
In particular, the core returned by the solver may be smaller than the working core at that point in the algorithm.
Hence, we can use the solver core $\core'$ to further shrink the working core.
This technique is shown on line~\ref{line:csr} in \Cref{algo:shrink}.

To exploit symmetries in shrinking-based methods, we use the set of symmetry generators directly.
\bart{Algo1does not work recursively? 
	If $c$ is marked and so is $\pi(c)$, you will not check for $\sigma(\pi(c))$?  You really only consider the generators?}
 \ignace{Yes, indeed only the generators in the implementation}
 \bart{This is a design decision! You might watn to discuss or at least mention it somewhere. }
 \ignace{Mentioned now in the paragraph about how to find symmetric transition constraints efficiently}
In particular, we mark symmetric counterparts of transition constraints.
We illustrate this in \Cref{example:deletion}.

\begin{example}\label{example:deletion}
Following on~\Cref{example:php},
take U to be $\{\pigeoncons{1}, \pigeoncons{2}, \pigeoncons{3}, \holecons{1}, \holecons{2}\}$ and \mm{\holecons{1}} and \mm{\holecons{2}} are the only constraints that are marked to be required by previous iterations.
If the algorithm selects $\pigeoncons{1}$ to be the next constraint to test, the oracle will report $\core \setminus \{\pigeoncons{1}\}$ is satisfiable.
Hence, \mm{\pigeoncons{1}} is a transition constraint.
However, as all \mm{\pigeoncons{i}} are interchangeable for the MUS problem, this means also \mm{\pigeoncons{2}} and \mm{\pigeoncons{3}} are required, and they can be marked as such.  
\end{example}

\begin{algorithm}[t]
\small
\begin{algorithmic}[1]
 \caption{$\call{Symm-Shrink}(\formula, \new{\mathit{recompute?}})$ }\label{algo:shrink}
        \STATE $\core \gets \formula$; \new{$\generators \gets \constraintsymmetries{\formula}$}
        \WHILE{there are unmarked constraints in \mm{\core}}
            \STATE $\checkthis \gets$ next unmarked constraint in \mm{\core} \;
            \STATE $(\issat, \assignment, \core') \gets \sat(\core \setminus \{\checkthis\})$
            \IF{\mm{\issat}}
                \STATE mark $\checkthis$ as required \label{line:mark}
                \new{
                \IF{$\mathit{recompute?}$}
                    \STATE $\generators \gets \constraintsymmetries{\core}$
                \ENDIF
                \FOR{ each $\pi \in \generators$}
                    \IF{$\pi(\core) = \core$} \label{line:checksame}
                        \STATE mark $\pi(\checkthis)$ as required \label{line:marksym}
                    \ENDIF
                \ENDFOR
                
                }
            \ELSE
                \STATE $\core \gets \core'$ \label{line:csr}
            \ENDIF
        \ENDWHILE
        \RETURN{$\core$}
\end{algorithmic}
\end{algorithm}
To mark such symmetric transition constraints, we are interested in constraint symmetries mapping \core to \core.
\begin{proposition}[Symmetric transition constraint]
If $\pi$ is a constraint symmetry of unsatisfiable formula $\core\subseteq\formula$ and $c$ a transition constraint for $\core$, then $\pi(c)$ is also a transition constraint for $\core$.
\end{proposition}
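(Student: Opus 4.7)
The plan is to unpack both hypotheses in terms of the definition of constraint symmetry and then use the fact that $\pi$, being a permutation of $\core$, commutes with set-difference of singletons.

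First I would observe that being a transition constraint is a two-part condition: (i) $\core$ is unsatisfiable, and (ii) $\core \setminus \{c\}$ is satisfiable. Condition (i) is already assumed for $\core$, so I only need to establish that $\core \setminus \{\pi(c)\}$ is satisfiable. I would also note the implicit requirement $\pi(c)\in\core$, which follows because $\pi$, being a constraint symmetry of $\core$, is by assumption a permutation of $\core$ and hence maps $\core$ to itself.

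Next, I would apply $\pi$ to the set $\core \setminus \{c\}$. Since $\pi$ is a bijection on $\core$, set operations commute with it:
\[
\pi(\core \setminus \{c\}) \;=\; \pi(\core) \setminus \{\pi(c)\} \;=\; \core \setminus \{\pi(c)\}.
\]
Now invoke the defining property of a constraint symmetry applied to the subset $C := \core \setminus \{c\} \subseteq \core$: the set $\pi(C)$ is satisfiable iff $C$ is satisfiable. Because $c$ is a transition constraint of $\core$, $C = \core\setminus\{c\}$ is satisfiable, so $\pi(C) = \core \setminus \{\pi(c)\}$ is satisfiable as well. Combined with the unchanged unsatisfiability of $\core$, this shows $\pi(c)$ is a transition constraint.

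I do not expect any real obstacle: the statement is essentially a one-line consequence of the definitions once one checks that $\pi$ commutes with removing a singleton, which in turn requires only that $\pi$ is a bijection on $\core$. The only subtlety worth flagging explicitly is that the proposition assumes $\pi$ is a symmetry of $\core$ (not merely of $\formula$), which ensures both $\pi(\core)=\core$ and that the symmetry property applies to subsets of $\core$; this is also precisely why line~\ref{line:checksame} of \Cref{algo:shrink} tests $\pi(\core)=\core$ before marking $\pi(\checkthis)$.
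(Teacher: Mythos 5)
Your proposal is correct and follows essentially the same route as the paper's own proof: both reduce the claim to the chain $\pi(\core\setminus\{c\})=\pi(\core)\setminus\{\pi(c)\}=\core\setminus\{\pi(c)\}$ and then invoke the satisfiability-preservation property of a constraint symmetry. Your explicit remarks that $\pi(\core)=\core$ follows from $\pi$ being a symmetry of $\core$ itself, and that this is what line~\ref{line:checksame} of \Cref{algo:shrink} checks, are accurate additions but do not change the argument.
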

\begin{proof}
    As $\core$ is unsatisfiable and $\pi$ is a constraint symmetry of $\core$, $\pi(\core)$ is a also unsatisfiable.
    Furthermore, as $\core \setminus \{c\}$ is satisfiable, $\pi(\core \setminus \{c\}) = \pi(\core) \setminus \{\pi(c)\} = \core \setminus \{\pi(c)\}$ is satisfiable and hence $\pi(c)$ is a transition constraint.
\end{proof}
Finding constraint symmetries of \core can be done either by iteration over each permutation in the symmetry groups in $\generators$ or by invoking the symmetry-detection tool again on the sub-problem \core
(when flag $\mathit{recompute?}$ is true).
The efficiency depends on the overhead of detecting symmetries and the structure of the global symmetry groups.
Indeed, for large symmetry groups, iteration over each permutation may be infeasible or inefficient when only a subset of permutations map \core to \core.
However, some symmetry-detection-tools can summarize structured symmetry groups using a matrix.
For those matrices, we can easily find all of the symmetric images of a given variable within a given subset.
Given a row-interchangeable symmetry described by matrix \mm{M}, a core $\core$ and a (indicator) variable $a \in \core$.
Find the coordinate $(i,j)$ of variable $a$ in \mm{M}.
Now, find the indices of columns $\mathit{cols}$ where, on row $i$, variables in $\core$ occur -- clearly $j \in \mathit{cols}$.
Then, iterate over each each row $r$ in the matrix and define $\mathit{cols}_r$ to be the columns $k$ where $x_{rk} \in \core$. When $\mathit{cols} = \mathit{cols_r}$, then $x_{rj}$ is a symmetric image of $a$ in $\core$.

To further reduce the overhead, we only project to symmetric transition constraints using the generators.

Note that our modification exploits symmetries without ``breaking'' them.
That is, any MUS that may be returned by \call{Shrink} may also be returned by \call{Symm-Shrink}.

Our approach shares similarities with \emph{model rotation} \cite{belov2014clausal}.
Given a satisfiable subset of constraints to check $\core \setminus \{\checkthis\}$, model rotation exploits the assignment \mm{\assignment} found by the oracle to $\core \setminus \{\checkthis\}$.
In particular, it searches for a variable assignment in \mm{\assignment}, such that, when the assignment is changed, \checkthis is satisfied and \emph{exactly one} other constraint $\checkthis' \in \core$ becomes unsatisfied.
When such a variable assignment is found, we can be certain \mm{\checkthis'} is a transition constraint for the MUS and marked as such.

Clearly, model rotation implicitly captures ``simple'' symmetries and hence resembles our approach.
However, in $\call{SymmShrink}$, the solution to $\core \setminus \pi(\checkthis)$ may contain any number of changed variable assignments, whereas model rotation only changes a single assignment.
Moreover, efficient model rotation requires the concept of a ``flip-graph'' which, to the best of our knowledge, can only be constructed easily for clausal input.
As the algorithms described in this paper are more broadly applicable, we do not consider shrinking-based methods using model rotation as the baseline.
In the future, we aim to explore the differences and similarities between model rotation and our approach further.

\subsection{Symmetry breaking in MUS-computation}
\label{sec:static-ocus}

In this section, we investigate how symmetry breaking can be used to speed-up MUS computation.
That is, instead of finding any MUS, we search for a lex-minimal MUS.

\begin{example}[Lex-minimal MUS]
Given the set of MUSes from \Cref{example:php}, and the following constraint order:
$$
    \pigeoncons{1} < \pigeoncons{2} < \pigeoncons{3} < \pigeoncons{4} < \holecons{1} < \holecons{2}
$$
Then $\{\pigeoncons{1}, \pigeoncons{2}, \pigeoncons{3}, \holecons{1}, \holecons{2}\}$ is the only lex-minimal MUS.
\end{example}

Lex-minimal MUSes can be computed using the \quickxplain algorithm.
QuickXplain takes as input a partial ordering of constraints and computes a preferred MUS based on that ordering.
Hence, when providing an ordering mapping to the lex-leadership relation, \quickxplain finds a lex-minimal MUS.
However, such ordering is often not reported by symmetry detection tools as they internally use the order to construct a set of breaking constraints, which can be used for static symmetry breaking.
Therefore, we do not consider the \quickxplain algorithm here and instead focus on methods that can directly use symmetry breaking constraints returned by the detection tool.
In particular, instead of finding any MUS, we search for an Optimal Constrained Unsatisfiable Subset (OCUS) \cite{gamba2023efficiently}.
\begin{definition}[OCUS]
Given a set of constraints \formula, a cost function $f: 2^\formula \rightarrow \mathbb{N}$ and predicate $p: 2^\formula \rightarrow \{\mathit{false}, \mathit{true}\}$.
Then $\core \subseteq \formula$ is an OCUS with respect to $f$ and $p$ if:
\begin{itemize}
    \item \core is unsatisfiable
    \item $p(\core)$ is true
    \item for all other unsatisfiable subsets $\core' \subseteq \formula$ for which $p(\core')$ holds, $f(\core) \leq f(\core')$
\end{itemize}
\end{definition}

In this paper, we use the concept of an OCUS to find an MUS with minimal cardinality by using a linear cost function $f$ with equal weights.
%
Combined with a set of constraint symmetry breaking constraints as predicate $p$, this ensures the resulting OCUS is a smallest, lex-minimal MUS of \formula.

The only OCUS computation algorithm we are aware of \cite{gamba2023efficiently} is a modification of the well-known \emph{smallest MUS} algorithm, which is based on the hitting-set dualization between MUSes and MCSes \cite{ignatiev2015smallest}.

\begin{definition}[Minimal Correction Subset]
    Given a set of constraints $\formula$, a Minimal Correction Subset (\mcs) is a subset $\corr \subseteq \formula$ for which $\formula \setminus \corr$ is satisfiable and for which any proper subset $\corr' \subsetneq \corr$ it holds that $\formula \setminus \corr'$ is unsatisfiable.
\end{definition}

Informally, an MCS is a minimal set of constraints which, when relaxed, render the problem satisfiable.

\begin{proposition}[Hitting set dualization \cite{ignatiev2015smallest}]
\label{prop:hittingset}
    Given a set of constraints \formula, let $\mathit{MUSes}(\formula)$ and $\mathit{MCSes}(\formula)$ be the set of all MUSes and MCSes of \formula, respectively. Then, the following holds
    
\begin{enumerate}
    \item A subset \core of \formula is an MUS if and only if \core is a minimal hitting set of $\mcses({\formula})$; and 
    \item A subset \corr of \formula is an MCS if and only if \corr is a minimal hitting set of $\musses({\formula})$.
\end{enumerate}
\end{proposition}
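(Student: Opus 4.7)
The plan is to prove both directions of each equivalence using the standard duality argument between maximally satisfiable subsets and MCSes. The central observation I will rely on is: a subset $S \subseteq \formula$ is satisfiable if and only if $S \subseteq \formula \setminus \corr$ for some MCS $\corr$ of $\formula$; equivalently, $S$ is satisfiable iff $S$ fails to hit some MCS. This holds because every satisfiable subset extends to a maximally satisfiable subset, whose complement is by definition an MCS.

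For item (1), I would first show the forward direction. Suppose $\core$ is an MUS. To see $\core$ hits every MCS, assume toward contradiction that some MCS $\corr$ satisfies $\core \cap \corr = \emptyset$; then $\core \subseteq \formula \setminus \corr$, which is satisfiable, contradicting that $\core$ is unsatisfiable. For minimality, fix any $c \in \core$. By minimality of the MUS, $\core \setminus \{c\}$ is satisfiable, hence (by the central observation) there exists an MCS $\corr$ with $(\core \setminus \{c\}) \cap \corr = \emptyset$, so $\core \setminus \{c\}$ is not a hitting set. Conversely, if $\core$ is a minimal hitting set of $\musses(\formula)$\ldots wait, of $\mcses(\formula)$: then $\core$ cannot be satisfiable, for otherwise $\core$ would lie inside $\formula \setminus \corr$ for some MCS $\corr$ and would miss $\corr$, contradicting the hitting property. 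Minimality of $\core$ as a hitting set directly yields that for each $c \in \core$, $\core \setminus \{c\}$ misses some MCS and is therefore satisfiable, giving minimality of the MUS.

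Item (2) is entirely symmetric, exchanging the roles of MUSes and MCSes and replacing ``satisfiable subset contained in $\formula \setminus \corr$ for some MCS'' with its dual ``unsatisfiable superset containing some MUS''. The dual observation I would use is: a subset $T \subseteq \formula$ is such that $\formula \setminus T$ is unsatisfiable iff $T$ intersects every MUS of $\formula$. Granted this, the MCS conditions translate line-by-line into minimal-hitting-set conditions on $\musses(\formula)$ in exactly the same way as above.

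The only real obstacle is the dual observation used in item (2): it requires that every unsatisfiable subset of $\formula$ contains an MUS, which is immediate in the finite setting considered here by iteratively removing non-transition constraints until none remain. Once this is stated cleanly, both equivalences follow from essentially the same two-line arguments, so there is no deeper combinatorial content to grind out.
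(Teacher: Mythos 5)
The paper itself gives no proof of this proposition---it is imported from \citet{ignatiev2015smallest} (ultimately Reiter-style hitting-set duality)---so your attempt can only be judged on its own terms. Your treatment of item~(1) is correct and is the standard argument: the ``central observation'' that $S$ is satisfiable iff $S \subseteq \formula \setminus \corr$ for some MCS $\corr$ (via extension to a maximal satisfiable subset) is exactly the right lemma, and both directions, including the reduction of hitting-set minimality to the sets $\core \setminus \{c\}$, go through.

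There is, however, a polarity error in the ``dual observation'' you state for item~(2): as written, ``$\formula \setminus T$ is unsatisfiable iff $T$ intersects every MUS of $\formula$'' is false (take $T = \emptyset$ with $\formula$ unsatisfiable: the left side holds, the right side fails). The correct dual of your central observation is: $\formula \setminus T$ is \emph{unsatisfiable} iff $\formula \setminus T$ \emph{contains} some MUS, i.e.\ iff $T$ \emph{fails to hit} some MUS; equivalently, $\formula \setminus T$ is satisfiable iff $T$ hits every MUS. You in fact name the right supporting fact (every unsatisfiable subset contains an MUS, by finiteness) and the right dual phrase (``unsatisfiable superset containing some MUS''), so this reads as a slip rather than a conceptual gap---but if the lemma were applied literally as stated, the two directions of item~(2) would come out inverted. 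With the polarity fixed, the symmetric line-by-line translation you describe does yield item~(2): an MCS $\corr$ hits every MUS because $\formula \setminus \corr$ is satisfiable, and $\corr \setminus \{c\}$ misses some MUS because $\formula \setminus (\corr \setminus \{c\})$ is unsatisfiable; the converse is analogous.
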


The Implicit-Hitting-Set (IHS) algorithm to find an OCUS keeps a set $\setstohit$ of correction subsets (initially empty).
In each iteration, an optimal and constrained hitting set of $\setstohit$ is computed, and the satisfiability of this set is checked using an oracle.
If the hitting set is satisfiable, one or more correction subsets are generated from it using \Cref{algo:corrsubsets} and are added to the sets to hit $\setstohit$.

This process is repeated until the optimal hitting set is UNSAT, and therefore, is an MUS.
The pseudo-code for this algorithm is shown in \Cref{algo:symmocus} (the $\mm{dynamic?}$ parameter will be explained later).

\begin{algorithm}[t]
\small
\caption{$\symmocus(\formula , f, \new{\mm{dynamic?}})$} 
\label{algo:symmocus}

\begin{algorithmic}[1]
    \STATE $\setstohit  \gets \emptyset$; \new{$\generators \gets \call{ComputeSymmetries}(\formula)$; \\$\breakers \gets \call{GetLexLeaderConstraints}(\generators)$  }
    \WHILE{true}
        \STATE $\mm{S} \gets \call{\new{Symm-}}\cohs(\setstohit,f, \new{\breakers})$ \label{line:opt}
        \STATE $(\issat, \assignment, \core) \gets \sat(S)$
        \IF{$\neg \issat$} \label{alg:ocus-sat-check}
            \RETURN $(\mm{S}, \mathit{status})$
        \ENDIF
        \IF{\new{\mm{dynamic?}}}
            \new{
           \STATE $\mm{K} \gets \corrsubsets(\subsetT, \formula, \new{\generators})$ \label{line:dynamic-grow}
            }
        \ELSE
           \STATE $\mm{K} \gets \corrsubsets(\subsetT, \formula)$ \label{line:grow}
        \ENDIF
        \STATE $\setstohit  \gets \setstohit  \cup \mm{K} $  \label{alg:ocus:complement}
    \ENDWHILE
\end{algorithmic}
\end{algorithm}

To illustrate how breaking symmetries in IHS-algorithms may be useful, consider the following example.

\begin{example}[Lex-minimal hitting set]
Consider again $\php{4}{2}$.
After computing constraint symmetries, we can construct the following symmetry breaking constraints:
$
\{\pigeoncons{1} \vee \neg\pigeoncons{2}, 
 \pigeoncons{2} \vee \neg\pigeoncons{3}, 
 \pigeoncons{3} \vee \neg\pigeoncons{4}, 
 \holecons{1} \vee \neg\holecons{2}
\}$
which enforce a preference for lower-indice pigeon constraints and hole constraints.
Imagine during the run of the algorithm, the sets to hit (i.e., the correction subsets enumerated by previous iterations) are 
$\{\holecons{2}\}$ and $ \{\pigeoncons{3}, \pigeoncons{4}\}$

Then, a minimal hitting set is $\{\holecons{2}, \pigeoncons{3}\}$, which is satisfiable and thus clearly not an MUS.
However, the smallest set that satisfies the breaking constraints, and that hits $\{\holecons{2}\}$ and $\{\pigeoncons{3}, \pigeoncons{4}\}$ is $\{\pigeoncons{1}, \pigeoncons{2}, \pigeoncons{3}, \holecons{1}, \holecons{2}\}$.
This subset is UNSAT and, indeed, an MUS, and hence the algorithm terminates.
\end{example}

\subsection{Enumeration of symmetric MCSes}
\label{sec:dynamic-ocus}
Implicit Hitting Set algorithms for computing MUSes are based on the hitting set dualization as stated in~\Cref{prop:hittingset}.
They construct the set of minimal correction subsets lazily, by iteratively calling a hitting set solver and computing one or more correction subsets from the given hitting set.
In general, the more correction subsets can be added in each iteration of the algorithm, the fewer iterations are required to find an MUS. 
Indeed, as shown by \citet{ignatiev2015smallest}, enumeration of disjoint MCSes using for example \Cref{algo:corrsubsets}, significantly improves the algorithm's runtime.

\begin{algorithm}[t]
\small
\caption{$\corrsubsets(\subsetT, \formula, \new{\generators})$} \label{algo:corrsubsets}
\begin{algorithmic}[1]
\STATE $\mm{K} \gets \emptyset$; $S' \gets \subsetT$; $(\issat, \alpha, \core) \gets \sat(S')$
\WHILE{\issat}
    \STATE $C \gets \{c \mid c \in \formula, c~\text{is unsatisfied by}~\alpha\}$
    \new{
    \FOR{$\pi \in \generators$}
        \STATE $S' \gets S' \cup \pi(C)$
        \STATE $K \gets K \cup \{\pi(C)\}$
    \ENDFOR
    }
    \STATE $(\issat, \alpha, \core) \gets \sat(S')$
\ENDWHILE
\RETURN $K$
\end{algorithmic}
\end{algorithm}

Our contribution in this section also adds more MCSes in a single iteration of the algorithm.
We propose to use the symmetry-groups detected directly by enumerating symmetric images of correction subsets (line~\ref{line:dynamic-grow} of \Cref{algo:symmocus} when the $\mathit{dynamic?}$ flag is enabled).

We illustrate our idea in the following example:

\begin{example}
\label{example:dynamic-ocus}
    Take again the running example of $\php{4}{2}$.
    Imagine the sets to hit at some point in the algorithm are:
    $$
        \{H_1\}, \{P_1, P_2\}
    $$
    Then, a minimal hitting set is $\{H_1, P_1\}$, and from this subset, we can compute a minimal correction subset, for example, $\{P_2, P_3\}$.
    As all pigeon constraints are interchangeable, we can find the symmetric versions of the above MCS:
    $$
        \{P_1, P_2\}, \{P_1, P_3\}, \{P_1, P_4\}, \{P_2, P_4\}, \{P_3, P_4\}
    $$
    By adding all of these correction subsets to the sets to hit, the hitting set computed in the next iteration of the algorithm is guaranteed to include at least 3 out of the 4 pigeon constraints, as required in any MUS for this problem.    
\end{example}

Note that the number of symmetric MCSes may be exponential.
Therefore, enumerating \emph{all} of the MCSes can actually slow down the algorithm instead.
Either because just enumerating them is hard, or because the hitting set solver is slowed down significantly by the surplus in sets to hit.
In the experimental section, we evaluate several settings of the algorithm to investigate good values for the upper bound on the number of symmetric MCSes to add in each iteration.

In the implementation of the algorithm, we avoid re-generating MCSes by keeping track of the global set $\setstohit$.

The enumeration of extra MCSes and the addition of symmetry breaking constraints (Section~\ref{sec:static-ocus}) are complementary.
That is, for some hitting sets, the lex-leader constraints do not yield a larger hitting set (e.g., such as the one in Example~\ref{example:dynamic-ocus}).
However, as discussed above, the enumeration of MCSes also comes at a cost, so it might be better for some problems to use the lex-leader constraints in the hitting-set solver instead.
The combination and comparison of both techniques are presented in the experimental section.

\subsection{Symmetries for MUS-enumeration}

\begin{algorithm}[t]
\small
\begin{algorithmic}[1]
 \caption{$\call{Lex-MARCO}(\formula)$ } \label{algo:marco}
        \STATE $\map \gets \emptyset $; \new{$\generators\gets \call{ComputeSymmetries}(\formula)$; \\$\breakers \gets \getbreakers$}
        \WHILE{\map is satisfiable}
            \STATE $\subsetT \gets \call{\new{Symm-}GetUnexplored}(\map, \new{\breakers})$ \label{line:marcomap}
            \STATE $(\issat, \alpha, \core) \gets \sat(\subsetT)$
            \IF {\issat}
                \STATE $\mm{C} \gets \call{Grow}(\subsetT, \formula)$
                \label{line:marcogrow}
                \STATE Block down \mm{C}
            \ELSE
                \STATE $\core' \gets \call{Shrink}(\core)$
                \STATE Block up $\core'$
            \ENDIF
        \ENDWHILE
        \RETURN{$\core$}
\end{algorithmic}
\end{algorithm}

The previous sections describe modifications to algorithms for computing \emph{one} MUS.
In some applications, users may be interested in a collection of MUSes instead.
To this end, \citet{liffiton2016fast} proposed the MARCO algorithm (\Cref{algo:marco}). 

Similar to the IHS algorithm presented before, the MARCO algorithm is based on the hitting set dualization of MUSes and MCSes (\Cref{prop:hittingset}) and explores the power-set of all constraints in an efficient way.
In particular, instead of calculating a \emph{minimal} hitting set to the set of correction subsets, MARCO computes \emph{any} hitting set (line~\ref{line:marcomap} in~\Cref{algo:marco}), which is called the \emph{seed}. 
Next, a SAT oracle is invoked to check whether the computed seed is satisfiable.
If this is the case, the seed is used to calculate a correction subset, which is then added to the hitting-set solver.
When the seed is unsatisfiable, it is shrunk further down to an MUS, and the hitting-set-solver is instructed to exclude any super-set of the found MUS as next seeds.

When many symmetries are present in the constraint specification, the problem may contain many MUSes that are similar from a user perspective \cite{LeoGB024}.
To reduce the cognitive load on a user, we propose to use symmetry breaking constraints in the map-solver to specify a preference on the seed to compute from the map.
This modification will only generate seeds that adhere to the lex-leadership relation defined by the symmetry breaking constraints, and hence the number of MUSes enumerated by the algorithm is reduced.
When the full set of MUSes is required for the application at hand, it can be reconstructed in post-processing based on \Cref{prop:symmus}.
This is similar to how symmetries are used when counting solutions to a satisfaction problem \cite{wang2020counting}.

Note that any method may be used to grow and shrink the seed in the MARCO algorithm.
Therefore, any symmetry-related optimizations to either shrinking or growing-algorithms may directly benefit the performance of MARCO as well.
E.g., for shrinking, one can use \call{\new{Symm-}Shrink} (\Cref{algo:shrink}) 
and for growing similar symmetry-inspired techniques can be devised. 

\end{section}

\begin{section}{Experiments}
\label{sec:experiments}

\begin{figure*}[t]
\begin{subfigure}[c]{0.33\textwidth}
\includegraphics[width=\textwidth]{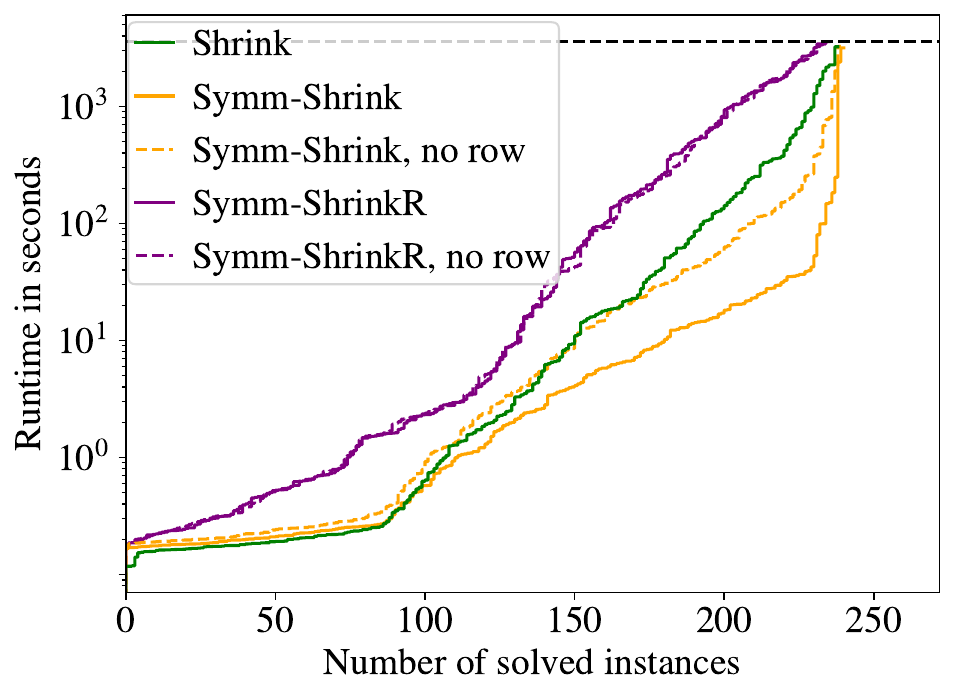}
\subcaption{Runtime for shrink-based methods}
\label{fig:deletion}
\end{subfigure}
\begin{subfigure}[c]{0.33\textwidth}
\includegraphics[width=\textwidth]{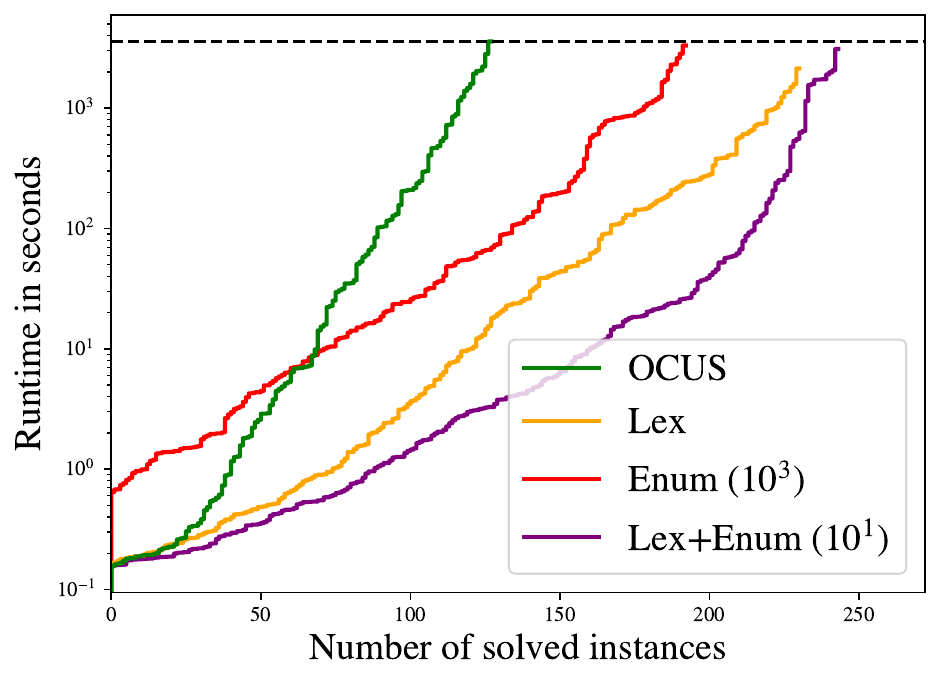}    
\subcaption{Runtime for IHS-based methods}
\label{fig:ihs}
\end{subfigure}
\begin{subfigure}[c]{0.33\textwidth}
\includegraphics[width=\textwidth]{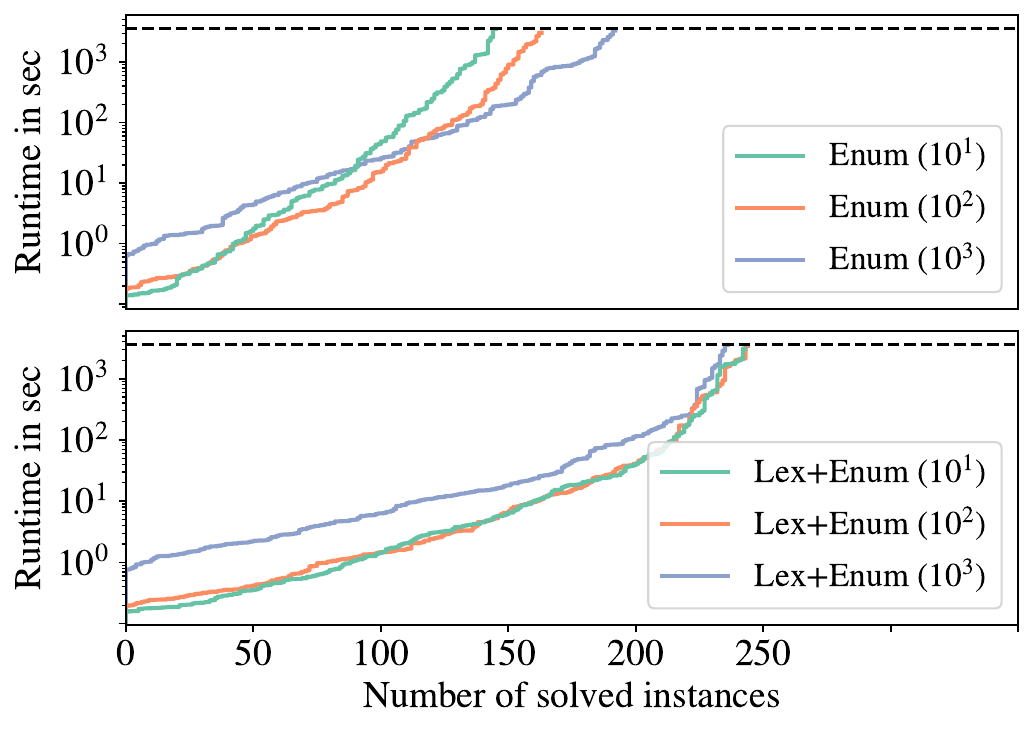}    
\subcaption{MCS enumeration for IHS-based methods}
\label{fig:mcses}
\end{subfigure}
\caption{Runtime for MUS-computation methods, measured across all 272 instances with a time-limit of 1h.}
\label{fig:computation}
\end{figure*}

In this section, we evaluate our proposed modifications to algorithms for MUS-computation and enumeration on a set of benchmark instances.
We aim to answer the following experimental questions:

\begin{EQ}
    \item To what extent is making MUS-computation symmetry-aware beneficial in terms of runtime? 
    \item How can symmetries be used for MUS enumeration?
    \item How do MUS-computation algorithms benefit from the detection of row-interchangeability symmetries?
\end{EQ}

We run all methods presented in this paper on unsatisfiable constraint problems encoded as pseudo-Boolean problems.
Our benchmark consists of 272 instances with 146 pigeon-hole problems, 66 n+k-queens problems, and 60 bin-packing problems.

We implemented\footnote{\url{https://github.com/ML-KULeuven/SymmetryMUS}} all MUS-finding algorithms on top of the \cpmpy constraint modeling library \cite{guns2019increasing}, version 0.9.20 in Python 3.10.14.
Pseudo-Boolean solver Exact v1.2.1 \cite{Exact,elffers2018divide} is used as SAT-oracle and Gurobi v11.0.2 as hitting-set-solver.
Symmetries are computed using a custom branch of \breakid\footnote{on commit \texttt{4e9b15fd}} \cite{breakid2016}.
All methods were run on a single core of an Intel(R) Xeon(R) Silver 4214 CPU with 128GB of memory on Ubuntu 20.04.
We used a time-out of 1h which includes symmetry-detection by \breakid and unrolling to symmetric MUSes in \call{Lex-Marco}.

\subsection{MUS computation}
Figure~\ref{fig:computation} shows the runtime of all methods for computing MUSes discussed in this paper.
We first focus on \Cref{fig:deletion}, which compares the runtime of shrink-based methods.
We compare the default algorithm (\sotashrink), the version removing symmetric transition constraints (\skipsym), and its version when recomputing symmetries in each iteration of the algorithm (\recomp). 
For both symmetry-aware algorithms, we also compare the runtime when instructing BreakID to detect no row-interchangeability symmetries.

We can clearly see the removal of symmetric transition constraints is beneficial for the runtime of the algorithm as \skipsym\xspace solves all instances between 5 and 10 times faster compared to the default. 
Still, the detection of row-interchangeability symmetries is essential for a successful implementation of this approach.
For most instances in our benchmarks, recomputing symmetries in each iteration of the algorithm proves to be less efficient, even compared to the default algorithm (EQ3).

Next, we compare the runtime of each version of the IHS algorithms for computing OCUSes.
We compare the default algorithm (\sotaocus), the addition of symmetry breaking constraints (\lex), the enumeration of symmetric MCSes (\enumsym($u$)), and lastly, the combination (\lex+\enumsym($u$)). Here, $u$ is the upperbound on the number of MCSes added in each iteration.
From \Cref{fig:ihs}, we see that any version of the algorithm performs better compared to OCUS, and the most efficient version uses a combination of symmetry breaking constraints and the enumeration of symmetric MCSes.
Indeed, compared to the OCUS, Lex+Enum finds an MUS for almost double the number of instances.

Comparing the different versions of Enum, we notice an increase in runtime for easier instances when adding more MCSes.
This can clearly be seen from \Cref{fig:ihs} and from the top of \Cref{fig:mcses}.
Still, Enum can solve more instances as the number of MCSes increases, but there clearly is a limit to where this method can be pushed.
When combining symmetry-breaking and MCS-enumeration, more MCSes do not yield in more instances solved, and the performance slightly degrades instead.
This can be seen from the bottom of \Cref{fig:mcses}.
This is due to the overhead in the hitting set solver when dealing with both the lex-leader constraints and the surplus in sets to hit.
Hence, for Lex+Enum, it is better to keep the number of extra MCSes low.

Overall, we can conclude that making MUS-computation techniques aware of symmetries in the unsatisfiable problem has a positive impact on their runtime (EQ1).

\subsection{MUS enumeration}
\begin{figure}[t]
    \begin{subfigure}[c]{0.49\columnwidth}
        \includegraphics[width=\textwidth]{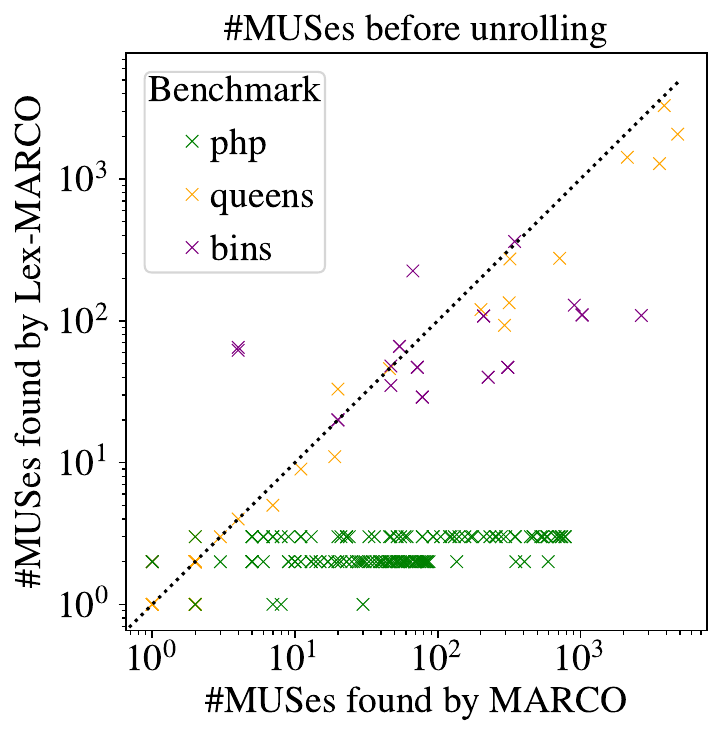}   
    \end{subfigure}
    \begin{subfigure}[c]{0.49\columnwidth}
        \includegraphics[width=\textwidth]{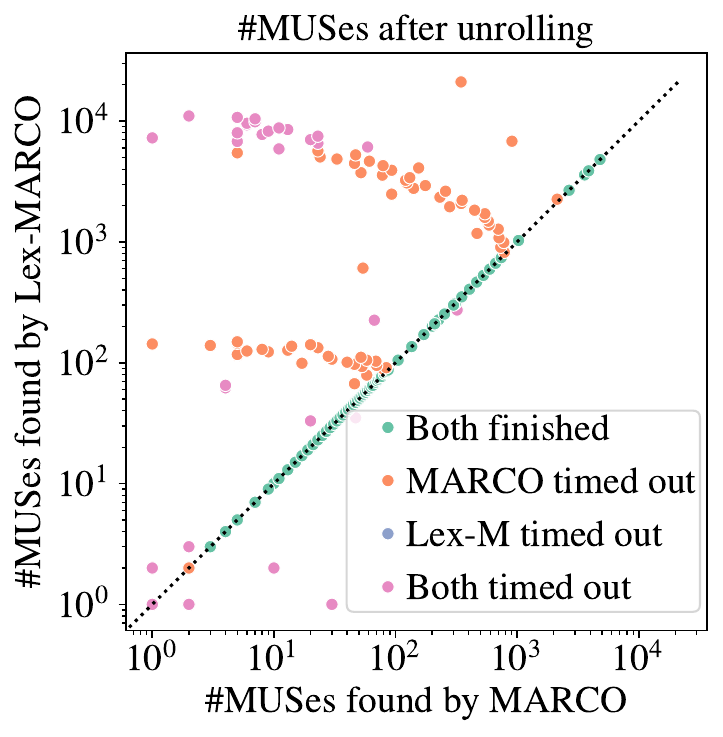}   
    \end{subfigure}
    \caption{Number of MUSes enumerated within 1h}
    \label{fig:enumeration}
\end{figure}
We use the MARCO algorithm and Lex-MARCO with post-processing to unroll the set of lex-minimal MUSes to the full set.
Both algorithms used the non-symmetric version of \call{Shrink} for shrinking and the \call{BLS} procedure from \citet{marques2013computing} for growing.
\Cref{fig:enumeration} shows the number of MUSes enumerated within 1h.
From the left-hand side, it is clear the number of MUSes computed from the lex-minimal seeds is reduced for most of the instances compared to running vanilla MARCO.
Instances found above the diagonal timed out for MARCO and the number of MUSes enumerated so far was smaller than the number of MUSes computed from the lex-minimal seeds by Lex-MARCO.
 
On the right-hand side of \Cref{fig:enumeration}, we show that the total number of MUSes after unrolling is higher compared to using the baseline MARCO algorithm.
Indeed, Lex-MARCO completely enumerates the set of MUSes for 197 instances compared to MARCO completing 112.
When both algorithms reach the timeout, MARCO may compute more MUSes as Lex-MARCO cannot start, or cannot complete unrolling within the given time-budget.

Overall, we can conclude that exploiting constraint symmetries can aid MUS enumeration by reducing the number of returned MUSes before unrolling and speeding up MUS enumeration when the full set is to be enumerated (EQ2).
\end{section}

\begin{section}{Conclusion and outlook}
In this paper, we propose various modifications to algorithms for finding MUSes inspired by symmetry breaking in SAT-and PB-solving.
We demonstrated how to use off-the-shelf symmetry-detection tools to find symmetries in constraint specifications and how to use those symmetries in MUS computation techniques.
Our results show that the presence of symmetry can indeed slow down MUS-finding algorithms, and symmetry handling methods are essential for finding MUSes quickly.
This paper opens the door for future symmetry-inspired enhancements for computing MUSes, and as a next step, we plan to implement our methods into an existing state-of-the-art MUS finder.
This allows us to evaluate our methods on competition benchmarks and compare them to other techniques, such as model rotation.
\end{section}

\section*{Acknowledgements} 

This work was partially supported by  Fonds Wetenschappelijk Onderzoek -- Vlaanderen (project G070521N), by the European Union (ERC, CertiFOX, 101122653 \& ERC, CHAT-Opt, 01002802 \& Europe Research and Innovation program TUPLES, 101070149). Views and opinions expressed are however those of the author(s) only and do not necessarily reflect those of the European Union or the European Research Council. Neither the European Union nor the granting authority can be held responsible for them.

\bart{This refs file seems to be somewhat messy. For instance multiple copies of BreakID paper with different details. Needs to be taken care of before CR }
\bibliography{aaai25-symmetrymus.bib}

\ifarxiv
\begin{appendix}

\section{Proofs}
\label{appendix:proofs}

\setcounter{proposition}{0}

\begin{proposition}[Deriving partial symmetries]
Let $\subsetT$ be a subset of $\alllits(\formula)$ closed under negation and $\pi$ a symmetry of $\formula$.
If $\pi(\subsetT) = \subsetT$, then $\pi|_\subsetT$ is a partial \subsetT-symmetry of $\formula$.   
\end{proposition}
\begin{proof}
Let $\mu$ be any assignment of $S$ (which is a partial assignment of the set of all variables). 
We need to show that  $\mu \approxmodels \formula$ iff $\pi(\mu) \approxmodels \formula$.
First assume  $\mu \approxmodels \formula$. In this case there is an $\alpha$ that extends $\mu$ and such that $\alpha\models \formula$. But this means that $\pi(\alpha)$ is a complete assignment that extends $\pi(\mu)$ and since $\pi$ is a symmetry, $\pi(\mu)\models \formula$, so indeed $\pi(\mu) \approxmodels \formula$.
For the other direction, if  $\pi(\mu) \approxmodels \formula$ there is an $\alpha$ that extends $\pi(\mu)$ and satisfies $\varphi$. But then $\pi^{-1}(\alpha)$ extends $\mu$ and also satisfies $\varphi$, which indeed shows that $\mu \approxmodels \formula$.
%
%
%
\end{proof}

\begin{proposition}
Let $\pi$ be a permutation of variables (that is a permutation of literals that does not cross polarity) $\indicators$ in $\indicators \implies \formula$, then $\pi$ directly maps to a permutation of constraints $\pi_\formula$.
Under this mapping, $\pi$ is an $\indicators$-symmetry iff $\pi_\formula$ is a constraint symmetry.
\end{proposition}

\begin{proof}
Let $\mu$ be an assignment of $\indicators$ and let $C_\mu \subseteq \formula$ be the set of constraints activated by $\mu$.
If $\pi$ does not cross polarity, then each positive literal in $\mu$ is mapped by $\pi$ to exactly one other positive literal in $\mu$, thus $\pi(\mu)$ activates a new set of constraints $\pi_\formula(C_\mu)$.
If $\pi$ is a an $\indicators$-symmetry, $\mu$ can be extended to a full assignment iff $\pi(\mu)$ can. 
Hence, $C_\mu$ is satisfiable iff $\pi_\formula(C_\mu)$ is satisfiable, ensuring $\pi_\formula$ is a constraint symmetry.

For the reverse mapping: if $\pi_\formula$ is a constraint symmetry, and $C \subseteq \formula$ is satisfiable, then clearly the partial assignment of $\mu$ and $\pi(\mu)$ can be extended to a full assignment, namely a solution to $C$ and $\pi(C)$ respectively.
If $C$ is unsatisfiable, then $\mu$ and $\pi(\mu)$ cannot be extended to a full solution.

\ignace{Not sure about the reverse? Seems trivial: if }

\ignace{TODO}
\end{proof}

\begin{proposition}
Given a constraint symmetry $\pi$ of $\formula$ and a subset $\core \subseteq \formula$.
Then, $\pi(\core)$ is an MUS of \formula if and only if $\core$ is an MUS of \formula.
\end{proposition}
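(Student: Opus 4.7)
The plan is to prove both directions at once by noting that the inverse of a constraint symmetry is again a constraint symmetry, so it suffices to show: if $\core$ is an MUS of $\formula$, then $\pi(\core)$ is an MUS of $\formula$. The argument then has two ingredients that mirror the two clauses of the definition of MUS: unsatisfiability of the whole, and satisfiability of every proper subset.

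First I would establish unsatisfiability of $\pi(\core)$. This is immediate from the definition of constraint symmetry: $\core \subseteq \formula$ is unsatisfiable, and by hypothesis $\pi(C)$ is satisfiable iff $C$ is satisfiable for every $C \subseteq \formula$, applied to $C = \core$. For this to make sense I should note that $\pi(\core) \subseteq \formula$ since $\pi$ is a permutation of $\formula$.

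Next I would handle minimality. Let $V \subsetneq \pi(\core)$ be an arbitrary proper subset. Since $\pi$ is a bijection on $\formula$, $V' := \pi^{-1}(V)$ is a proper subset of $\core$. By minimality of $\core$, $V'$ is satisfiable. Then applying the constraint-symmetry property of $\pi$ to $V' \subseteq \formula$ gives that $V = \pi(V')$ is satisfiable. Thus every proper subset of $\pi(\core)$ is satisfiable, so $\pi(\core)$ is an MUS.

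Finally, for the converse direction I would simply observe that $\pi^{-1}$ is itself a constraint symmetry of $\formula$ (immediate from the biconditional ``$C$ satisfiable iff $\pi(C)$ satisfiable'' being symmetric in the two formulas), and then apply the argument above with $\pi$ replaced by $\pi^{-1}$ and $\core$ replaced by $\pi(\core)$. I do not anticipate any real obstacle here; the proof is essentially a matter of unwinding the definitions and exploiting that $\pi$ is a bijection on the finite set $\formula$ so that strict inclusions are preserved in both directions.
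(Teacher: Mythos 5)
Your proof is correct and follows essentially the same route as the paper's: split the MUS property into unsatisfiability (immediate from the definition of constraint symmetry) and subset-minimality (handled by pulling proper subsets of $\pi(\core)$ back through $\pi^{-1}$ and invoking minimality of $\core$). The only cosmetic difference is that you argue minimality directly over all proper subsets while the paper phrases it as a contradiction via shrinking to a smaller MUS, and you make explicit that $\pi^{-1}$ is again a constraint symmetry to get the converse direction.
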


\begin{proof}
We need to proof $\pi(\core)$ is unsatisfiable if and only if $\core$ is unsatisfiable and; $\pi(\core)$ is subset minimal iff $\core$ is subset-minimal.
The first condition follows directly from the definition of a constraint symmetry.
We proof the second condition using contradiction.
Suppose $\core$ is an MUS for $\formula$, and $\pi(\core)$ is unsatisfiable, but not subset-minimal.
Then, we can shrink $\pi(\core)$ to a true MUS $\core' \subsetneq \pi(\core)$.
Now, it holds that $\pi^{-1}(\core') \subsetneq \core$ and by definition of a constraint symmetry, $\pi^{-1}(\core')$ is unsatisfiable, which is in contrast with the hypothesis that $\core$ is subset-minimal.
\end{proof}

\section{Example of a row-interchangeability symmetry}
\label{appendix:rowsym}
Consider the row-interchangeability symmetry as defined by the following matrix.

$$
\begin{bmatrix}
\color{blue}{a_{11}} & \color{blue}{a_{12}} & a_{13}\\
\color{blue}{a_{21}} & \color{blue}{a_{22}} & a_{23}\\
a_{31} & \color{blue}{a_{32}} & a_{33} \\
\color{blue}{a_{41}} & \color{blue}{a_{42}} & \color{blue}{a_{43}} \\
\end{bmatrix}
$$
When executing \call{Symm-Shrink}, consider the current core to be $\core = \{a_{11}, a_{12}, a_{21}, a_{22}, a_{32}, a_{41}, a_{42}, a_{43}\}$ and a transition constraint corresponding to the indicator variable $a_{12}$.
Then, the only symmetry mapping $\core$ to $\core$ defined by the matrix is swapping the first and second rows.
Hence, the only symmetric transition constraint is the one corresponding to $a_{22}$.

\section{Benchmarks}\label{appendix:benchmarks}
In this section, we provide the detailed description of the benchmarks used in this paper.
All benchmarks are derived from those used in the literature, with minor modifications in order to make the models unsatisfiable.

\paragraph{Pigeon hole problem}
We generated instances of saturated pigeon-hole problems, ranging from 5 to 150 pigeons with in increments of 2.
For each number of pigeons, the number of holes were limited to \#pigeons-2 and \#pigeons-3.
This results in a total benchmark set of 146 instances.
\begin{align*}
    & \sum_{j = 1}^{h} x_{ij} \geq 1 & \forall i = 1..p \tag{\pigeoncons{i}} \\
    & \sum_{i = 1}^{p} x_{ij} \leq 1 & \forall j = 1..h \tag{\holecons{j}}
\end{align*}

\ignore{
\paragraph{Nurse-rostering}
The nurse-rostering problem considers the assignment of $n$ nurses  to shifts in a hospital.
Each day in the time horizon $d$, $s$ shifts have to be filled by exactly $c$ nurses.
The model uses Boolean  variables $x_{ijk}$ indicating that on day $i$ in the schedule, shift $j$ is covered by nurse $k$ and contains the following constraints.

\begin{itemize}
    \item Each shift requires exactly $c$ nurses
    $$ \sum_{k = 1}^n x_{ijk} = c \quad, \forall i = 1..d, j = 1..s $$
    \item Nurses cannot do two shifts in a day
    $$\sum_{j = 1}^s x_{ijk} \leq 1 \quad, \forall i = 1..d, k = 1..n$$
    \item Nurses cannot do the last shift of a day, and the first shift of the proceeding day
    $$\neg x_{isk} \vee \neg x_{i+1,1,k} \quad, \forall i = 1..d-1, k = 1..n$$    
    
\end{itemize}
We generate 92 instances with the number of nurses ranging from 10 to 100, in increments of 2 and the time-horizon is put on either 7 of 14 days. 
For all instances, the number of shifts in a day is set to 3 and the problem is made unsatisfiable by requiring too many nurses $c$ for each shift.
}

\paragraph{N+k-queens}
The n-queens problem is a classic problem when investigating symmetries in constraint satisfaction problems.
The problem consists of finding an arrangement of $n$ queens on a $n \times n$ dimensional chess-board, in such a way that no two queens can attach each other.
In the unsatisfiable variant, we enforce $k$ queens extra on the board.
We generated 66 instances, with $n$ ranging from 4 to 25 and $k$ set to 2, 3 or 4.

\begin{align*}
    &\sum_{i = 1}^n x_{ij} \leq 1 \quad, \forall j = 1..n \\
    &\sum_{j = 1}^n x_{ij} \leq 1 \quad, \forall i = 1..n \\
    & \sum_{i=1}^{n-j+1} x_{i+j-1,i} \leq 1 \quad, \forall j=1..n\\
    & \sum_{i=1}^{n-j+1} x_{n-i-j+2,i} \leq 1 \quad, \forall j=1..n\\
    & \sum_{i=1}^{n-j+1} x_{i+j-1,n-i+1} \leq 1 \quad, \forall j=2..n\\
    & \sum_{i=1}^{n-j+1} x_{n-i-j+2,n-i+1} \leq 1 \quad, \forall j=2..n\\
    &\sum_{i = 1}^n \sum_{j = 1}^n x_{ij} = n + k
\end{align*}

\paragraph{Binpacking}
Bin-packing considers the problem of packing $n$ items with a given value (size) into $b$ equivalent bins with a fixed capacity $c$.
The goal is to find an allocation of items to bins such that the number of used bins is minimized.
We generate 60 instances similar to the approach in \cite{schwerin1997bin}.
We range the number of bins from 5 to 25 and sample the size of each item from $\mathcal{U}(5,10)$.
The capacity of each bin is fixed to 25 and the problems are made unsatisfiable by limiting the number of available bins to 70, 80 or 90\% of the required number of bins.
\begin{align*}
    \sum_{i = 1}^n w_{ij} \cdot x_{ij} \leq c \quad, \forall j = 1..b
\end{align*}

\end{appendix}

\fi

\end{document}